\documentclass{article}

\usepackage{microtype}
\usepackage{graphicx}
\usepackage{subcaption}
\usepackage{booktabs} % for professional tables
\usepackage{xurl}

\usepackage{hyperref}

\usepackage[accepted]{icml2026}

\usepackage{amsmath}
\usepackage{amssymb}
\usepackage{mathtools}
\usepackage{amsthm}

\usepackage{tikz}
\usetikzlibrary{shapes.geometric, arrows.meta, positioning, calc, fit, backgrounds}
\usepackage{tikz}
\usepackage{standalone} % Necessary to skip headers of {standalone} documents
\usepackage{tikzscale} % To use \includegraphics with TikZ code
\usepackage{array}
\usepackage{tabularx}
\usepackage{colortbl}
\usepackage{xcolor}

\definecolor{meekblue}{RGB}{31,119,180}
\definecolor{mightyred}{RGB}{214,39,40}
\definecolor{lightmeek}{RGB}{174,214,241}
\definecolor{lightmighty}{RGB}{245,183,177}
\definecolor{decisionbg}{RGB}{245,245,245}

\usepackage{listings}
\usepackage{xcolor}

\lstdefinestyle{pycode}{
  language=Python,
  basicstyle=\ttfamily\footnotesize,
  keywordstyle=\color{blue!55!black},
  commentstyle=\itshape\color{gray},
  stringstyle=\color{green!45!black},
  numbers=left, numberstyle=\tiny\color{gray},
  showstringspaces=false,
  breaklines=true, breakatwhitespace=true,
  columns=fullflexible, keepspaces=true,
  frame=single, rulecolor=\color{black!25},
  xleftmargin=1.5em,
}

\usepackage[capitalize,noabbrev]{cleveref}

\theoremstyle{plain}
\newtheorem{theorem}{Theorem}[section]

\newtheorem{corollary}[theorem]{Corollary}
\theoremstyle{definition}
\newtheorem{definition}[theorem]{Definition}

\theoremstyle{remark}

\newcommand{\R}{\mathbb{R}}

\icmltitlerunning{Two AI Metrics Diverged: Will it Make All the Difference?}

\begin{document}

\twocolumn[
  \icmltitle{Two AI Metrics Diverged: Will it Make All the Difference?}

  \icmlsetsymbol{equal}{*}

  \begin{icmlauthorlist}
    \icmlauthor{Alex Fogelson}{equal,mitft}
    \icmlauthor{Zachary A. Brown}{equal,mitft}
    \icmlauthor{Hans Gundlach}{mitft}
    \icmlauthor{Jayson Lynch}{mitft}
    \icmlauthor{Neil Thompson}{mitft}
  \end{icmlauthorlist}

  \icmlaffiliation{mitft}{MIT FutureTech, CSAIL, Cambridge, MA, USA}

  \icmlcorrespondingauthor{Alex Fogelson}{fogelson@mit.edu}

  % You may provide any keywords that you find helpful for describing your
  % paper; these are used to populate the "keywords" metadata in the PDF but
  % will not be shown in the document
  \icmlkeywords{Machine Learning, Scaling Laws, Compute, Benchmarks, Time Horizon}

  \vskip 0.3in
]

\printAffiliationsAndNotice{\icmlEqualContribution}

\begin{abstract} As exponential compute scaling continues, will the capabilities of frontier AI models outstrip what is accessible to developers on a small fixed budget? Or will capabilities converge, with ``meek models inheriting the earth"? Building on \citet{gundlach2025meek}, we show that the answer depends on how we value and measure AI capabilities. We discuss conventional performance measures and show that, while validation loss shows a shrinking gap, on other metrics frontier models grow their lead forever. Classifying performance metrics by their functional forms in relation to training (and inference) compute, we provide tight mathematical conditions for determining which metrics favor meek models, and show that bounded performance metrics always do. But careful interpretation of performance metrics is essential: we show that many common bounded metrics have closely-related counterpart metrics that are unbounded (and vice versa). Determining the apt metric in a domain is a prerequisite for policy, since bounded and unbounded metrics may suggest opposing policy responses. If a particular capability --- like software engineering, synthetic biology, or rhetorical persuasiveness --- is unbounded when measured in the terms we care about, frontier-level capability will likely be concentrated in the hands of a few wealthy actors. Conversely, if that capability is instead bounded, frontier-level capabilities proliferate through meek models into the hands of the many. 
\end{abstract}

\section{Introduction}

\subsection{Background}
When AI models are trained with more compute they gain increased capabilities, as measured by (for example) validation loss \citep{kaplan2020scalinglawsneurallanguage,rosenfeld2021scaling, hoffmann2022trainingcomputeoptimallargelanguage,Bahri_2024}. Similarly, when AI models are allowed to reason about problems for longer --- by using more compute at inference time --- capabilities improve, as measured by (for example) success on benchmark tasks \citep{jones2021scaling, epoch2023tradingoffcomputeintrainingandinference}. As a result of these regularities, companies have invested exponentially increasing amounts in compute for pre-training and inference~\citep{Sevilla_2022, epoch2026hyperscalercapextrend}. 

These large expenditures have enabled a small handful of companies to offer more powerful AI capabilities than available elsewhere. 

But is this oligopolistic equilibrium guaranteed? Providers of smaller, cheaper open-weight models have been able to replicate the capabilities of expensive proprietary models with only a short delay --- often less than a year \citep{epoch2024openmodelsreport,emberson2025open_weights_lag}. If the capabilities gap doesn't widen, but instead shrinks, we might expect frontier-level capabilities to diffuse very widely. In such a world, regulating AI capabilities may require compliance from many actors, or require enforcing regulations at the hardware level --- which would require substantial new technological means and political will \citep{ogara2025hardwareenabledmechanismsverifyingresponsible}. 

Given the regulatory relevance, it would be valuable to know whether the capabilities gap between expensive proprietary models and cheaper open models will shrink or widen. However the empirical literature is sparse, ambivalent, and --- as we'll see --- sensitive to one's individual utility function over a range of capabilities \citep{emberson2025open_weights_lag,aiindex2025_ch2_technical_performance,ihle2025weirdml_gap}. 

Against this backdrop, a recent paper \citet{gundlach2025meek} provides a theoretical argument that the gap could shrink. The authors (also authors of this paper) compare the performance of models with exponentially growing investment in training compute to ``meek'' models which are trained at a constant level of compute investment. Both models benefit from exponential improvements in hardware and algorithm efficiency (meek models' \textit{effective} compute grows, even if their \textit{compute investment} doesn't), but meek models ride a slower exponential growth rate. The authors show that, on a few common performance metrics --- validation loss, sigmoidal benchmarks --- the performance of meek models and frontier models converge in the long run. They analyze several specific capabilities, but the long-run properties do not depend on the specific domain, only the functional form of how compute translates into performance. The underlying intuition is that both models are traversing a fundamentally bounded performance metric at different paces as they increase their effective compute: while the frontier model gets near the bound sooner, eventually both models are in the region where the metric doesn't change much (or in some cases, at all) over time. 

\subsection{Capabilities Versus Metrics}
\label{sec:capabilities}

As we will show, this result is sensitive to the specific mathematical properties of the performance metric analyzed. This gives rise to a problem: in machine learning, two metrics used to measure ostensibly similar capabilities can often have very different functional forms with respect to compute. Metrics are often selected for accurately capturing the ordinal ranking of models in some domain, and there is typically no requirement that the cardinal performance level of models capture something especially meaningful. This is an issue if we aim to draw conclusions about a performance gap in units that are meaningful --- if we do, that would require us to use a performance metric that reflects how capabilities improvements are \textit{valued} in some domain. Utility as a function of model capability may be highly non-linear, and the behavior of the selected performance metric needs to take into account this non-linearity if it aims to make meaningful claims about performance gaps or other cardinal properties of the metric.

In this paper, we show that some performance metrics are ``meek'' metrics and some are ``mighty'' metrics. ``Meek'' metrics are ones where ``meek models inherit the earth" --- that is, they eventually see models converge in performance under exponentially diverging compute expenditure. In contrast, ``mighty'' metrics do not converge. We show that if a metric is bounded, then it is meek. \textbf{Critically, we also show that subtle differences in utility function\footnote{A note on terms: by ``utility function'', we are referring to any function that evaluates how much a particular gain in capabilities matters for some real-world aim. Utility functions may differ between actors, tasks, and domains.} over a model's capabilities can produce metrics which change from unbounded to bounded, from mighty to meek, from sensitive to exponential investment gaps to (in the long run) completely indifferent to them. }

Consider the case of two software engineers using coding agents. The first is required to briefly validate the code produced at regular task intervals (e.g. every few human work hours) and aims to improve the accuracy within that interval; the second aims to maximize the length of the task that can be achieved within some reasonable error tolerance. Both engineers prefer models with improving capabilities, but their utility functions produce dramatically different real-world preferences. The first engineer's utility function is \textit{bounded} (since there is a maximum 100\% success rate). Therefore, as time progresses, the first engineer will find less and less difference between models trained with exponentially more compute and models which only leverage shared improvements in algorithms and hardware. By contrast, the second engineer will see continual improvements on their metric from larger and larger models.

\subsection{Contributions}
Upon seeing empirical claims about converging or diverging performance in some domain, it is important to understand that the result may be imposed by the choice of performance metric. The task of an AI analyst, economist, or policymaker is then to determine whether the metric is appropriate.

On the metrics we care about, it is useful to understand the long-term behavior of model capabilities. Our expectation, informed by our discussion of metrics below, is that we will see meek models inherit some parts of the world, and the mighty dominate in other domains. Concentration in some areas; diffusion in others.

The remainder of this paper is as follows. We summarize the contribution of \citet{gundlach2025meek}; define ``meek metrics" and describe the boundary conditions that confer meek metric status (Section~\ref{sec:math}); survey common performance metrics and discuss their sensitivity to interpretation (Section~\ref{sec:wild}); discuss how the framework can be generalized to model gaps in inference compute (Section~\ref{sec:inference}); and conclude by discussing limitations, complexities, and implications for society (Section~\ref{sec:discussion}).

\section{Core Mathematical Definitions and Results}
\label{sec:math}

\subsection{Meek Models Argument}
\label{sec:meekmodels}

Our goal here is to generalize the argument made in \citet{gundlach2025meek}, which analyzed theoretical differences in \textit{validation loss} between so-called \textit{meek models} and \textit{frontier models}. Since validation loss is a power law in effective training compute, one can analytically observe the closed form of the difference as:
\[\Delta L = A(C_{meek})^{-\alpha} - A(C_{frontier})^{-\alpha}\]

In this context, $0<\alpha<1$ and $A$ are parameters governing the power law, and $C$ is \textit{effective training compute}: raw training compute scaling multiplied by shared exponential growth factors from \textit{algorithmic / data progress} and \textit{hardware efficiency} progress. While both frontier and meek models see exponentially growing effective training compute, frontier models have \textit{faster} exponential growth, since they exponentially increase investment in raw training compute over time. 

More concretely, let $g_h$ be the shared annual growth factor of hardware efficiency (in FLOPs per dollar); $g_a$ be the shared annual growth factor of algorithmic and data progress; $g_i$ be the annual growth factor of training compute scaling for the frontier model only; and $C_0$ be some initial effective compute. Then the loss difference yields a difference of two decaying exponentials, which quickly approach zero after a one-time peak.

\[\Delta L  =  A[(g_ag_h)^tC_0]^{-\alpha} - A[(g_ig_ag_h)^tC_0]^{-\alpha}\]

However, as we discuss in Section~\ref{sec:loss}, loss can be an opaque metric for measuring capabilities; it's not obvious how to translate loss into utility. Thus we ask, when does this argument generalize to other performance metrics?

\subsection{Generalized Notion of ``Meek Metrics''}
\label{sec:definitions}
We briefly present a collection of definitions and results which allow for straightforward categorization of performance metrics. Proofs can be found in Appendix~\ref{Sec:Formal Definitions and Proofs}. \\ 

\begin{definition}[Normal Performance Metric]
If a function mapping training compute to performance, $P: \R^+ \rightarrow \R$, is both differentiable and weakly monotonically increasing, we call it a \textbf{normal performance metric}.
\end{definition}
These criteria are natural: (1) Performance growth is typically smooth in compute, and even when performance exhibits sharp jumps, such transitions are still smooth \citep{power2022grokking}, (2) Although adding compute can diminish performance (e.g. overfitting), since we only require weak monotonicity, any model which degrades in performance can simply be discarded in favor of the prior, better model.

Notice that here and throughout when we talk about ``metrics" we are talking not just about a particular way of evaluating performance (e.g. a benchmark) but the curve that describes how \textit{realized or forecasted} performance on that evaluation changes with compute.\\

\begin{definition}[Meek Metric]
Let $P: \R^+ \rightarrow \R$. We say $P$ is a meek metric under exponential scaling if it is a normal performance metric and for all pairs of exponents $b > a > 1$, and for all initial values of training compute $C_0 > 0$, the following limit holds:
$$\lim_{t \rightarrow \infty} P(b^tC_0) - P(a^tC_0) = 0$$
\end{definition}
Intuitively, $P$ is invariant to exponential differences in effective training compute, where $a$ comes from hardware/algorithmic efficiency, and $b$ adds exponential increases in investment. Given some initial training compute investment $C_0$, two actors who see exponential differences in the growth of that effective training compute will see no durable difference in performance in the long run.

\begin{definition}[Mighty Metric]
A metric is mighty if and only if it is a normal performance metric and it is not meek. That is, if $P$ is a normal performance metric but $\lim_{t \rightarrow \infty} P(b^tC_0) - P(a^tC_0) > 0$, then $P$ is mighty.
\end{definition}
 Note that a mighty metric does not \textit{necessarily} exhibit \textit{divergence} between frontier and meek model performance. There could be a constant capabilities gap, for example (see Theorem \ref{thm:meek_classification}).

\subsection{Results}
\label{sec:results}

We now present two results which classify the space of normal performance metrics. % for \textit{bounded} and \textit{unbounded} functions. 
We start with a common and convenient special case of bounded performance metrics.

\begin{theorem}\label{thm:bounded}
If $P$ is a normal performance metric which is bounded above, then $P$ is a meek metric.
\end{theorem}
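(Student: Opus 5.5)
The argument rests on the monotone convergence theorem. Since $P$ is weakly monotonically increasing and bounded above, the limit $L := \lim_{x\to\infty} P(x) = \sup_{x>0} P(x)$ exists and is finite. Given this, the meek property reduces to the fact that both $b^tC_0$ and $a^tC_0$ tend to infinity, so both evaluations converge to the same finite limit $L$ and their difference tends to zero.

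\textbf{Step 1: existence of the limit.} I would first show that $L$ exists. Set $L = \sup_{x>0} P(x)$, which is finite by boundedness. Given $\varepsilon>0$, choose $x_\varepsilon$ with $P(x_\varepsilon) > L-\varepsilon$. Monotonicity then gives $L-\varepsilon < P(x) \le L$ for all $x \ge x_\varepsilon$, hence $P(x)\to L$ as $x\to\infty$.

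\textbf{Step 2: composition with the two exponentials.} Fix $b>a>1$ and $C_0>0$. Because $a>1$, we have $a^tC_0\to\infty$ as $t\to\infty$, and likewise $b^tC_0\to\infty$. So for $t$ large enough that $a^tC_0 \ge x_\varepsilon$, we also have $b^tC_0 \ge x_\varepsilon$, and both values of $P$ lie in $(L-\varepsilon, L]$. Therefore
\[
0 \le P(b^tC_0) - P(a^tC_0) < \varepsilon .
\]
Nonnegativity follows from monotonicity, since $b^tC_0 \ge a^tC_0$. Letting $\varepsilon\to 0$ gives the limit $0$. Since $P$ is normal by hypothesis, it is therefore meek.

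\textbf{Main obstacle.} There is essentially none; the only substantive point is Step 1. Boundedness alone would not suffice, because $P(b^tC_0)$ and $P(a^tC_0)$ could oscillate out of phase. It is monotonicity that forces the two to share a common limit. Differentiability is not needed for this direction.
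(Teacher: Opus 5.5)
Your proof is correct and takes the same route as the paper: the paper applies the Monotone Convergence Theorem to $t \mapsto P(a^tC_0)$ and $t \mapsto P(b^tC_0)$ and concludes that both tend to the least upper bound of $P$. Your Steps 1 and 2 also spell out a point the paper leaves implicit, namely that each composed function converges to $\sup_x P(x)$, and not merely to its own supremum, because $a^tC_0 \to \infty$ and $b^tC_0 \to \infty$.
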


Although this result follows immediately from the Monotone Convergence Theorem\footnote{The Monotone Convergence Theorem asserts that a monotone real-valued function with an upper bound must converge to its least upper bound.}, it is practically useful given the diversity of performance metrics which are indeed bounded. For example, the original result from \citet{gundlach2025meek} follows as an immediate corollary. \\

For the broader class of unbounded normal performance metrics, we have the following characterization.
\begin{theorem}\label{thm:meek_classification}
Let $P$ be a normal performance metric which is unbounded above. Moreover, suppose the derivative of $P$ with respect to $\log \log C$ is eventually monotone. Then $P$ is a meek metric if and only if 
$$\lim_{C \rightarrow \infty} \frac{P(C)}{\log \log C} = 0$$
\end{theorem}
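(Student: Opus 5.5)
The plan is to change variables to $s=\log\log C$, which turns exponential compute scaling into a constant additive shift. Define $Q(s)=P(\exp(\exp s))$ for $s\in\R$. Then $Q$ is differentiable and weakly increasing, so $Q'\ge 0$. By hypothesis $Q'$ is eventually monotone, and since $P$ is unbounded above, $Q$ is unbounded above. Put $s_t(b)=\log\log(b^tC_0)=\log(t\log b+\log C_0)$, which is defined for large $t$ even if $C_0<1$. Then
\[
s_t(b)=\log t+\log\log b+\varepsilon_t(b),\qquad \varepsilon_t(b)\to 0 .
\]
Consequently
\[
s_t(b)-s_t(a)\to h:=\log\frac{\log b}{\log a}>0,
\]
and both arguments tend to $\infty$.

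First I show that the limit $L:=\lim_{s\to\infty}Q'(s)$ exists in $[0,\infty]$ and is strictly positive unless $Q'$ eventually decreases to $0$. If $Q'$ is eventually nonincreasing, it is bounded below by $0$, so it converges to some $L\ge 0$. If $Q'$ is eventually nondecreasing, it converges to some $L\in[0,\infty]$. In that case $L=0$ would force $Q'\equiv 0$ eventually, because a nondecreasing nonnegative function with limit $0$ is identically $0$. That would make $Q$, and hence $P$, eventually constant and therefore bounded, contradicting the hypothesis. So in the nondecreasing case $L>0$.

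Next I connect $L$ to the growth condition in the statement. Since $s=\log\log C\to\infty$ as $C\to\infty$, the condition $\lim_{C\to\infty}P(C)/\log\log C=0$ is equivalent to $Q(s)/s\to 0$. Because $Q'(s)\to L$, L'Hôpital's rule (or a direct Cesàro-type estimate on $Q(s)=Q(s_0)+\int_{s_0}^s Q'$) gives $Q(s)/s\to L$. This holds in the extended sense when $L=\infty$. Therefore the growth condition holds if and only if $L=0$.

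Finally I relate $L$ to meekness. By the mean value theorem,
\[
P(b^tC_0)-P(a^tC_0)=Q(s_t(b))-Q(s_t(a))=Q'(\xi_t)\,\bigl(s_t(b)-s_t(a)\bigr)
\]
for some $\xi_t$ between $s_t(a)$ and $s_t(b)$, so $\xi_t\to\infty$. Since $s_t(b)-s_t(a)\to h>0$, the difference tends to $L\cdot h$. This limit is $0$ exactly when $L=0$, for every choice of $b>a>1$ and $C_0$. Combining the last two steps, $P$ is meek iff $L=0$ iff $P(C)/\log\log C\to 0$.

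The main obstacle is the exact form of the shift: $s_t(b)-s_t(a)$ is not constant but only converges to $h$, and $Q'$ may be unbounded. I expect the mean value step to handle both issues cleanly, because $Q'(\xi_t)$ converges to the same extended limit $L$ and $h>0$ keeps the product well defined even when $L=\infty$.
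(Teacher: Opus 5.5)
Your proof is correct and follows essentially the paper's route. Like the paper, you pass to the derivative $D$ of $P$ with respect to $\log\log C$, use L'H\^opital to identify $\lim P(C)/\log\log C$ with $\lim D$, and write the gap as $D$ times $\log\log(b^tC_0)-\log\log(a^tC_0)\to\log(\log b/\log a)$. The differences are minor technical improvements. Your mean value theorem step replaces the paper's monotone integral sandwich $D(x)I(x)\le\Delta(x)\le D(kx^\lambda)I(x)$, which lets you skip its integrability justification. Your single extended-limit L'H\^opital also covers the $D\to\infty$ case that the paper handles with a separate integral estimate.
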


Intuitively, this bound holds for the following reason: a single application of the logarithm to inputs $b^tC_0$ and $a^tC_0$ still results in two terms which grow distinctly: ($t\log b + \log C_0$) and ($t \log a + \log C_0$). Yet their ratio clearly approaches $\log b / \log a$, such that after applying another logarithm, the limit approaches a constant $\log (\frac{\log b}{\log a}) = \log \log b - \log \log a$. Thus, in the limit, the function $P(C) = \log \log(C)$ results in a constant difference and acts as a sort of boundary point for this convergence.

\textbf{Together, these provide a tight criterion on the space of normal performance metrics}: (1) Boundedness suffices to show meekness; and (2) for the set of unbounded normal metrics which have well-behaved growth, functions are meek if and only if they grow slower than $\log \log C$.

Finally we show that being a meek metric under exponential compute scaling is equivalent to being a meek metric under power-law compute scaling. This shows meekness holds under more general assumptions about the future of progress in computing.

\begin{theorem}[Equivalence Under Power Law Scaling]
\label{thm:power_law_equiv}
Let $P: \R^+ \rightarrow \R$. Then the following are equivalent: 
\begin{enumerate}
\item For all exponents $b > a > 1$ and compute $C_0 > 0$, $\lim_{t \rightarrow \infty} P(b^tC_0) - P(a^t C_0) = 0$
\item For all powers $b > a > 0$ and compute $C_0 > 0$, $\lim_{t \rightarrow \infty} P(t^bC_0) - P(t^a C_0) = 0$ 
\end{enumerate}

\end{theorem}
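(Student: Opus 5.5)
The plan is to show that the two conditions in Theorem~\ref{thm:power_law_equiv} are the same statement written in two coordinate systems. The link is the substitution $s = \log t$, which turns power-law growth in $t$ into exponential growth in $s$. No property of $P$ is needed: not monotonicity, not differentiability, not boundedness. So I will not invoke Theorem~\ref{thm:bounded} or Theorem~\ref{thm:meek_classification}, and will argue directly from the definitions of the limits. Throughout I treat $t$ as a real parameter tending to $\infty$, as in the definition of a meek metric.

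\textbf{(2) implies (1).} Fix bases $b > a > 1$ and $C_0 > 0$. Set $\beta = \log b$ and $\alpha = \log a$, so that $\beta > \alpha > 0$. For real $s > 0$, put $t = e^s$. Then
\[
t^\beta C_0 = e^{s\log b} C_0 = b^s C_0, \qquad t^\alpha C_0 = a^s C_0 .
\]
Define $f(s) = P(b^s C_0) - P(a^s C_0)$ and $g(t) = P(t^\beta C_0) - P(t^\alpha C_0)$, so that $f(s) = g(e^s)$. By (2), $g(t) \to 0$ as $t \to \infty$. Given $\varepsilon > 0$, choose $T$ with $|g(t)| < \varepsilon$ for all $t > T$. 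Then $|f(s)| < \varepsilon$ for all $s > \log T$. This gives (1).

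\textbf{(1) implies (2).} This is the mirror argument. Given powers $b > a > 0$, the bases $e^b > e^a > 1$ satisfy the hypothesis of (1). For $t > 1$, write $s = \log t$, so that $t^b C_0 = (e^b)^{s} C_0$ and likewise for $a$. Since $s \to \infty$ exactly when $t \to \infty$, and $t \mapsto \log t$ is an increasing bijection from $(1,\infty)$ onto $(0,\infty)$, the same $\varepsilon$--$T$ transfer gives (2). Here one chooses $S$ for the exponential limit and uses $t > e^S$.

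\textbf{Main point to check.} The only step that needs care is that the limit is taken over a continuous parameter. If $t$ ranged only over the integers, the map $t \mapsto \log t$ would not send integers to integers, and the equivalence would need extra regularity of $P$, such as monotonicity together with a sandwiching argument between consecutive integers. Since the definitions use $t \to \infty$ over the reals, the reparametrization is exact and no such estimate is required. Beyond that, the proof only checks that $\log$ matches the two parameter ranges: bases in $(1,\infty)$ correspond to powers in $(0,\infty)$, and strict order is preserved.
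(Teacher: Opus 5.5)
Your proposal is correct and matches the paper's proof: both directions use the same reparametrization. The substitution $s=\log t$ turns powers $b>a>0$ into bases $e^b>e^a>1$, and $t=e^s$ turns bases $b>a>1$ into powers $\log b>\log a>0$; your explicit $\varepsilon$--$T$ transfer and your remark that the limit is over a real parameter just spell out what the paper leaves implicit.
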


\begin{figure}[t]
\centering
\resizebox{\linewidth}{!}{%
\begin{tikzpicture}[
  every node/.style={font=\footnotesize},
  question/.style={rectangle, rounded corners=4pt, draw=black!60, fill=gray!8,
                   text width=4.0cm, align=center, inner sep=5pt},
  meekbox/.style={rectangle, rounded corners=4pt, draw=meekblue, line width=1pt,
                  fill=lightmeek, text width=2.4cm, align=center, inner sep=5pt},
  mightybox/.style={rectangle, rounded corners=4pt, draw=mightyred, line width=1pt,
                    fill=lightmighty, text width=2.4cm, align=center, inner sep=5pt},
  exbox/.style={rectangle, draw=gray!35, fill=white, rounded corners=2pt,
                text width=2.4cm, align=left, inner sep=4pt, font=\scriptsize},
  arr/.style={-{Stealth[length=5pt,width=4pt]}, semithick, draw=black!60},
  lbl/.style={fill=white, inner sep=1.5pt, font=\footnotesize\itshape},
]

% Root question
\node[question] (q1) at (0,0) {Is $P(C)$ \textbf{bounded above}?};

% Left leaf: bounded → MEEK
\node[meekbox] (m1) at (-4.2,-2.2) {%
  \textcolor{meekblue}{\bfseries MEEK}\\[1pt]
  \normalfont\scriptsize gap closes};

% Second question (unbounded branch); positioned so its center is right of q1
\node[question] (q2) at (3.0,-2.2) {Does $\frac{P(C)}{\log\log C} \!\to\! 0$ as $C \rightarrow \infty$?};

% Second-level leaves
\node[meekbox] (m2) at (1.2,-4.4) {%
  \textcolor{meekblue}{\bfseries MEEK}\\[1pt]
  \normalfont\scriptsize gap closes};
\node[mightybox] (mg) at (4.8,-4.4) {%
  \textcolor{mightyred}{\bfseries MIGHTY}\\[1pt]
  \normalfont\scriptsize gap persists};

% Yes arrow: diagonal left to m1
\draw[arr] (q1.south) -- node[lbl, left, pos=0.45] {Yes} (m1.north);

% No arrow: L-shaped so it arrives vertically and centered on q2.north
\draw[arr] (q1.south) -- node[lbl,  pos=0.45] {No} (q2.north);

% Second-level arrows
\draw[arr] (q2.south) -- node[lbl, left,  pos=0.45] {Yes} (m2.north);
\draw[arr] (q2.south) -- node[lbl, right, pos=0.45] {No}  (mg.north);

% Example boxes
\node[exbox, below=0.15cm of m1] {%
  \textit{e.g.}\ benchmark \%,
  win-rate, pass@$k$,
  $-$val.\ loss};
\node[exbox, below=0.15cm of m2] {%
  \textit{e.g.}\ $\sqrt{\log\log C}$\\
  (theoretical; no known
  real-world instance)};
\node[exbox, below=0.15cm of mg] {%
  \textit{e.g.}\ ELO, task
  horizon, $-\!\log\epsilon$,
  log-compute index};

\end{tikzpicture}
}
\caption{\textbf{Metric classification decision tree.}
Any normal performance metric is \textcolor{meekblue}{\textbf{meek}} (gap converges) or \textcolor{mightyred}{\textbf{mighty}} (gap persists) and can be classified with two questions.
Bounded metrics are always meek (Theorem~\ref{thm:bounded}); for unbounded metrics meekness holds iff the metric grows slower than $\log\log C$, subject to the weak growth conditions given in Theorem~\ref{thm:meek_classification}.}
\label{fig:decision-tree}
\end{figure}

\section{The Subtleties of Performance Metrics in the Wild}
\label{sec:wild}

The meek metric criterion is a strong one: if a performance metric is meek, it will eventually be indifferent to exponential differences in compute investment. Naturally, one might wonder which metrics, if any, are meek metrics. This question turns out to be both subtle and consequential. At first glance, the literature consists mostly of two types of metrics: bounded metrics which are meek (e.g. benchmarks, negative validation loss, inference scaling) and non-meek power law metrics (e.g. game performance, time horizon). Yet upon further examination, many meek (or non-meek) metrics have closely-related alternative metrics which are non-meek (or meek). This has important implications for those tracking and governing AI progress: two slightly different interpretations of the same capability imply profoundly different relationships to compute expenditure. 

We now present examples from the literature of common performance metrics to highlight these subtleties. In addition, we hope to supply the reader with a concise overview of many common metrics and their relationship to compute.

\subsection{Meek Metrics}
\subsubsection{Validation Loss}
\label{sec:loss}

The most notable example of a meek performance metric is, of course, validation loss predicted by neural scaling laws \citep{kaplan2020scalinglawsneurallanguage,rosenfeld2021scaling, hoffmann2022trainingcomputeoptimallargelanguage}. However, validation loss is a difficult metric to use on its own since (1) its interpretation is information-theoretic, rather than directly describing performance on a real-world task and (2) it \textit{diminishes} to zero. To resolve the latter trouble, one can apply transformations to loss to make it a Normal Performance Metric, one that is unbounded and increasing, but this choice has inherent freedom. As we discuss in Section~\ref{sec:bounded}, negative log-loss is a common and useful transformation in predicting other metrics (e.g. benchmarks).

\subsubsection{Benchmarks}
\label{sec:bounded}

In the era of large language models (LLMs), bounded metrics are by far the most common meek performance metric due to the prevalence of benchmarks scored from $0$-$100$. Even pre-dating LLMs, benchmarks like ImageNet \citep{deng2009imagenet}, CIFAR \citep{krizhevsky2009learning}, and MNIST \citep{lecun1998gradient} were foundational for measuring progress across computer vision.

Benchmarks which are bounded between $0$ and $100$ are meek precisely due to their boundedness. As analyzed originally in \citep{gundlach2025meek}, we should expect benchmark scores to converge across exponentially diverging compute investment as progress in hardware and algorithms / data brings even fixed-compute models closer to 100\%.

A body of existing work relating benchmarks, loss, and training compute demonstrates that benchmarks appear to be sigmoidal in negative log-loss (or log-compute\footnote{Under compute optimality, neural scaling laws derive a power law between compute and loss. Therefore, one expects log-loss and log-compute to be affine.}). That is: $\text{BenchmarkAccuracy} = \sigma(-\log L) = \sigma(\log C)$. \citet{owen2024predictable} fits a logistic sigmoid to Big-Bench \citep{srivastava2022beyond} and MMLU \citep{hendrycks2020measuring} scores as a function of compute. More generally, \citet{ruan2024observational} computes the principal components of (logit-transformed) benchmarks and finds correlations between the primary component and log-compute. Both papers find this sigmoidal relationship between performance and log-compute. Though one might naturally expect sigmoidal shapes for metrics which are bounded on both sides, why is the right transformation of compute (or loss) \textit{logarithmic?}

Extending \citet{schaeffer2023mirage}, we posit an explanation which not only justifies the presence of the logarithm, but also argues that benchmarks can be naturally thought of as a transformation of $\log(1/\epsilon)$ where $\epsilon$ is a per-token, amortized, task level error rate, and $\log(1/\epsilon)$  can be thought of as error rate orders of magnitude or ``the number of nines of reliability."\footnote{Technically, it's proportional to the number of nines, since we use natural log throughout.} Importantly, unlike the benchmarks it explains, $\log(1/\epsilon)$ is a mighty metric (the so-called ``march of nines"). 

Schaeffer first notes that validation loss is roughly the (negative) log-probability of correctly producing a particular token, $L \approx -\log p_i$ (when irreducible loss is negligible). In this case, $L$ is the reducible loss --- total cross-entropy minus the irreducible floor. For some number of tokens $T$ required for a single question, idealizing these draws as independent, one arrives at the following expression for accuracy as a function of compute, for some scaling exponent $0 < \alpha < 1$:

\vspace{-.25cm}
$$\text{Accuracy}(C) = {\underbrace{(e^{\log p_i})}_{\text{success rate}}}^T = (e^{-L})^T = e^{(-AC^{-\alpha} T)}$$
\vspace{-.25cm}

This function is a very gradual sigmoid in $C$, but as a function of $\log C$, it becomes the Gompertz function, a more abrupt sigmoid (explaining why benchmarks are empirically well modeled as sigmoidal in log compute). Writing the error rate using the approximation $e^{-x} \approx 1-x$ for small $x$, we get
$$\epsilon(C) = 1-\text{Accuracy}(C) = 1-e^{-LT} \approx LT$$

From this one straightforwardly sees that halving the error rate requires halving the loss, which itself requires a $2^{1/\alpha}\times$
increase in compute (for Chinchilla scaling roughly a 90× increase in training compute would be required to halve one’s error \citep{hoffmann2022trainingcomputeoptimallargelanguage}). Equivalently, $\log(1/\epsilon)$ is linear in log compute:
$$\log(1/\epsilon) \approx -\log(LT) \approx \alpha \log{C} - \log(AT)$$

This connection to benchmarks is consistent with other work from \citet{ho2025rosetta}, which finds the fundamental capability scale is linear in log-compute. Using a suite of benchmark scores, and assuming a sigmoidal form relating some latent model capability to downstream benchmark score, Ho et al. use an item-response theory (IRT) model to back out a universal model capability index. That index correlates strongly with log-compute (and thus likely log-loss), suggesting that the right scale for measuring capabilities is logarithmic in compute (or loss). 

Both of these expositions of benchmarks suggest benchmark performance may be best explained by a fundamental scale which is linear in log-compute -- and therefore a \textit{mighty metric} -- despite the benchmark itself being a \textit{meek metric}. Which scale is correct depends on one's relationship to the underlying measure: if one's utility is in error rate orders of magnitude, meek and frontier models will diverge. If near perfect accuracy on a fixed benchmark suffices, meek models will indeed inherit the earth.

\subsubsection{Misalignment}
This phenomenon can also arise when the functional form of benchmarks does not reflect the true utility of the underlying capability. For example, suppose one is testing a model's misalignment through a fixed benchmark \citep{zhang2023safetybench, mazeika2024harmbench, gabor2025evilgenie}. We \textit{could} measure misalignment on a suite of benchmarks with scores between 0 and 100\%. But suppose that the harm caused by an instance of misaligned behavior increases with time $t$ --- since ever-more capable models can cause bigger harms \cite{anthropic2026mythos} --- and so the size of a harm scales by (for example) $r^t$. Utility (or rather, disutility) might be better described by the \textit{expected harm}, rather than the frequency of harm, and thus $H(t, \epsilon) = \epsilon r^t$ may be a more suitable measure of disutility than $\epsilon$ alone (where $\epsilon$ is the proportion of the time the model displays misaligned behavior on the benchmark tasks). This metric grows unboundedly for sufficiently large $r$ relative to the rate of effective compute growth. This example again illustrates a broader pattern: benchmark accuracy is bounded and therefore meek, but benchmark accuracy often has closely related transformations with natural interpretations that are unbounded, and therefore possibly mighty.

\subsection{Non-Meek Metrics}
\label{sec:power law}

We first investigate two examples of power law performance metrics, where multiplicative increases in compute result in (usually smaller) multiplicative increases in performance. In contrast to the power laws in neural scaling laws, these metrics are monotonically \textit{increasing} in compute. We also discuss inference time scaling --- which is linear in log-compute --- requiring multiplicative changes in compute for linear changes in performance.

\subsubsection{Reinforcement Learning in Games}
\label{sec:games}

Game environments have been fundamental in the development of reinforcement learning and deep learning more generally \citep{bellemare2013arcade,mnih2015human}. A robust literature on compute scaling gives ample examples across domains, though typically using an ELO score. Notably, as \citet{neumann2022scaling} points out, ELO is merely the Bradley-Terry strength on a logarithmic scale, where two Bradley-Terry scores $\gamma_i$ and $\gamma_j$ correspond to a win-rate for player $i$ of $\gamma_i/(\gamma_i + \gamma_j)$.\footnote{Bradley-Terry has a natural interpretation: each player gets $\gamma_i$ lottery tickets (equal to their Bradley-Terry strength), and a single random draw determines the winner of the lottery. Thanks to Toby Ord for mentioning this interpretation and related discussions.} Thus any relationship which relates ELO as log-linear in compute, in fact, yields a power law in Bradley-Terry strength.

Indeed this is exactly what is found in the literature. In Hex, Pentago, and Connect Four, \citet{jones2021scaling} and \citet{neumann2022scaling} show exponential fits between ELO scores and training compute. This relationship is also observed by \citet{thompson2022importance} for ELO in historical Chess and Go systems for a mixture of machine learning and classical AI systems.

Performance in deterministic games with unknown optimal strategies may be unbounded, depending on the game and the possibility of draws. In a game without draws, for example, the Bradley-Terry strength $\gamma_i$ could be increased arbitrarily against some fixed reference opponent with strength $\gamma_j$. This would make Bradley-Terry an unbounded metric, even though the win-rate ($\gamma_i/(\gamma_i + \gamma_j)$) exists on a separate (bounded) scale. Bradley-Terry scores are mighty, while win-rates are meek.

This fact is quite intuitive. Consider chess engines: although chess engines may be able to improve indefinitely with exponential compute (or at least for many, many orders of magnitude), engines beyond a certain strength already have what they need to best any human. Though it was once surprising to see a computer beat grandmaster Garry Kasparov, improvements in hardware and algorithms have rendered any budget smartphone equally capable of defeating world-champions. With respect to chess win-rates, meek models have inherited the earth, a fact that now seems unsurprising.

\subsubsection{Task Horizon Length}
\label{sec:horizon}
Notably deployed by \citep{kwa2026measuring}, the task horizon length of a model measures the difficulty of reference tasks by how long completion takes humans, on average. Across a range of tasks of varying duration, one first fits a curve to predict the model's probability of success. After deciding on some threshold success probability (typically 50\%), one can derive the expected task length that the model will complete. Over time, the expected duration at a 50\% success rate appears to be increasing exponentially.

Analyses of the relationship between task horizon length and training compute show power-law relationships (\citet{whitfill2025forecasting}, although some results are imputed from benchmark scores). Related theoretical work gives a simple mechanism for this pattern \citep{sinha2026illusion}: if a task of length $n$ requires $n$ independent steps each to succeed with probability $p=1-\epsilon$, then achieving fixed task success probability $q$ requires
\[
n=\frac{\log(q)}{\log(1-\epsilon)}\approx \frac{-\log(q)}{\epsilon}.
\]
Hence fixed-threshold task horizon grows approximately in proportion to the inverse single-step error rate. As discussed above, error typically falls as a power law in compute, meaning it falls exponentially in time under exponential compute growth. So task horizon rises exponentially in time. 

Here again we find that an unbounded mighty metric --- task horizon length --- is a transformation of a bounded meek metric --- the per task error rate. Depending on circumstances, either metric may be the utility-relevant one.

\section{Extending to Inference Time Scaling}
\label{sec:inference}

We've defined meekness with respect to gaps in effective \textit{training} compute. However, the Meek Models Framework extends naturally to gaps in \textit{inference-time computation} budgets. Suppose two models have equal effective training compute, but one has an inference token budget growing at a quick exponential rate, as a result of an exponentially growing dollar budget on top of inference efficiency improvements and hardware efficiency improvements. Meanwhile, the other has a token budget growing at a slower exponential rate, getting the benefit only of inference and hardware efficiency improvements. Will the performance of the latter model catch up over time? 

The existing literature on inference scaling laws mostly uses benchmark accuracy as a metric, which (as we've seen) is inherently bounded and therefore meek, regardless of how fast benchmark accuracy rises. In practice, the particular functional forms for benchmark accuracy with respect to inference compute vary, depending on the inference-scaling technique used \citep{villalobos2023infscaling}. But in general, a common empirical pattern is that performance improves predictably with additional inference compute over a substantial range before eventually exhibiting diminishing returns \citep{brown2024large,ellismohr2025theory}. 

For example, take the particular inference scaling technique of repeated sampling with verification. Benchmark performance using this technique is often well fit by an exponentiated power law,
\[
\log\left(\mathrm{pass}_i@k\right) \approx a k^{b},
\]
where $k$ is the number of attempts (i.e. the amount of inference compute) \citep{brown2024large} and $a,b<0$ are fitted constants. This is the same functional form as we  derived above for benchmark performance as a function of log training compute, just this time in terms of log inference compute.\footnote{Interestingly, one paper argues that the reason we see this functional form in particular depends not just on the fundamental dynamics of test-time compute scaling, but also on the distribution of question difficulty across multi-question benchmarks \cite{schaeffer2025large}. (If all questions were equally difficult, performance would still scale to a bound, but you'd expect \textit{exponentially} decaying performance.) This is another reason to be careful when interpreting benchmarks' scaling trajectories, whether with respect to inference or training compute --- especially since the difficulty of questions across benchmarks is not something that benchmark designers typically design deliberately. } Since the pass rate is bounded above by $1$, it is a meek metric under our definition: even exponential differences in inference expenditure do not generate a permanent gap in this metric. (As we saw with training compute, though, the corresponding ``march of nines" metric is mighty.)

\citet{jones2021scaling} examines ELO vs perfect play in various games, where ELO eventually plateaus in inference compute. Note that ELO isn't constructed in such a way that makes unbounded performance theoretically impossible: inference compute just doesn't empirically yield continually increasing ELO performance, and so ELO-with-respect-to-\textit{inference}-compute turns out to be a meek metric. Finally, we see similar behavior so far for performance on METR's time horizon, where we see sharply diminishing returns with inference scaling \citep{ord2025hourlycosts}, even though the metric is theoretically unbounded.

The preceding discussion in this section has considered two models with equal effective training compute, but different inference compute budgets \textit{on a single task.} Two related but different questions are also worth considering:
\begin{enumerate}
    \item How do performance gaps on a task change when one model has a growing training compute budget \textit{and} a growing inference compute budget?
    \item How does the gap in \textit{economic returns} change when one actor has an exponentially growing inference compute budget, and uses it to do more economic tasks while holding the amount of inference per task fixed? 
\end{enumerate}

We leave both of these questions as avenues for future work: the first because more work is needed to define a joint training-inference scaling law, the second because this is a fundamentally economic question, beyond our scope here, and the answer will likely vary a great deal by domain.

\section{Discussion}
\label{sec:discussion}

\subsection{Limitations}
\label{sec:Limitations}

\subsubsection{Positional metrics}
\label{sec:positional}
In some domains, rewards are positional, where ordinal rankings of performance matter but cardinal positions don't. An extreme case of this is winner-takes-all competition, such as: a government contract that goes to the model that performs best on some metric. Here, a real-world reward is distributed on the basis of the ordinal position of models on an underlying metric. Positional metrics can't be modeled with the ``meek" framework. This is because these metrics are functions that take in \textit{multiple} players' capabilities. They are also often non-differentiable in compute: small compute increases can cause a player to leapfrog a competitor. 

However, the transformation of raw capabilities into positional metrics is in key respects similar to the transformations of bounded metrics. Just as many of the meek metrics we describe have related metrics that are non-meek that we might care about (and vice versa), many capabilities that have meek metrics have positional metrics where rewards always go to frontier models. Consider: even when an underlying performance metric is asymptotically bounded, the positional reward might always go to a frontier model with more compute. This is because, though all models may converge to equivalent performance in the limit, at any particular $t$, the frontier model has some (possibly infinitesimal) advantage over the meek model.

That said, this case can be subtle, since often what matters is the \textit{expected} positional reward ex ante when there is some stochasticity in how performance translates to position. This can be modeled with win-rates or ELO, described above.

\subsubsection{Alternative forms of compute scaling}
In our discussion of benchmarks, we've assumed that frontier model builders scale investment in compute exponentially. A reader may wonder if this assumption is reasonable, and if it is consequential for our conclusions.

\paragraph{Should we expect an exponentially growing compute gap?} Historically, frontier model compute scaling \textit{has} been exponential \citep{EpochAIModels2025}. However, the historical pattern may not persist. Persistence will require revenue to grow sufficiently quickly with compute scale to justify the expenditure. So far, it has \citep{epoch2025openairevenue}. But if, at some point, the revenue companies obtain from scaling is bounded, or does not grow sufficiently quickly, the \textit{gap in compute} between frontier models and meek models may itself shrink over time. As a result, \textit{revenue generated by AI systems} is a singularly important metric of AI capabilities. We welcome more efforts to understand how revenue scales with compute investment.

\paragraph{If compute gaps grow sub-exponentially, would this change our findings?} Theorem~\ref{thm:power_law_equiv} says that power-law scaling of compute would not overturn our assessment of meekness in any case. That said, convergence would be delayed. And substantially slower-than-polynomial scaling could have different results, especially for unbounded metrics. 

\begin{figure}[tbp]
    \centering
    \includegraphics[width=1\linewidth]{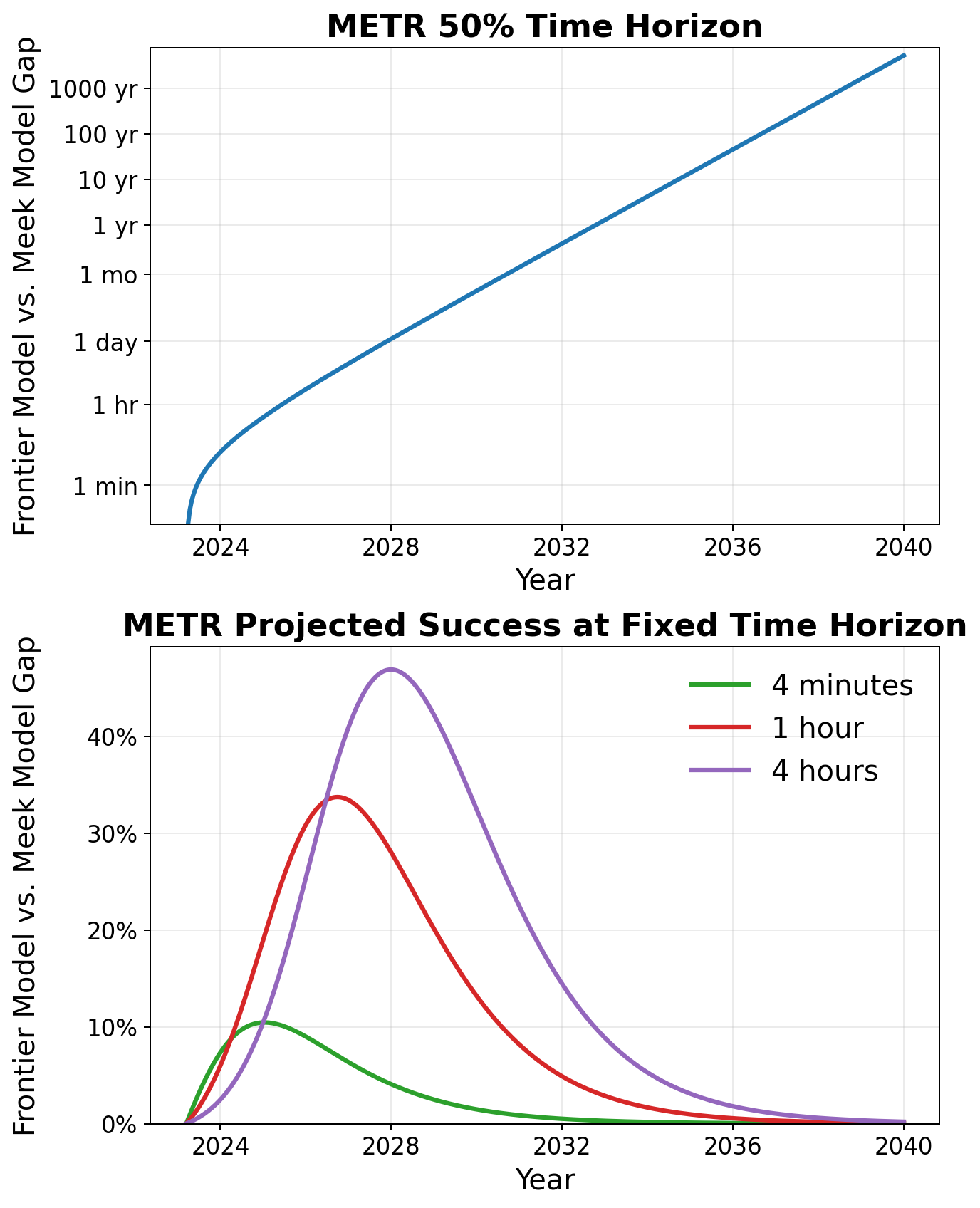}
    \caption{Gap between frontier model performance improvements and the imputed meek model performance (following the same trend more slowly, coinciding with the frontier at GPT-4) on two related metrics: the METR time horizon \citep{kwa2026measuring} (Top) and the probability of task success at 4-minute, 1-hour, and 4-hour tasks (Bottom). Both metrics are generated from the same underlying trend, but some lead to a meek outcome and some do not. These results are stylized estimates, not forecasts.}
    \label{fig:METR_metrics}
\end{figure}

\subsubsection{Alternative forms of algorithmic and data progress}
\paragraph{Scale-dependent algorithmic and data progress.} For simplicity, we've modeled algorithmic and data progress with the single coefficient $g_a$. This functions like a constant multiplier on a model's pretraining compute. Recent work \citep{gundlach2025originalgorithmicprogressai} finds that much empirical algorithmic progress is scale-dependent, increasing the scaling law \textit{exponent}, and thereby implying an increasing multiplier on models with more total compute. Incorporating exponent-shift algorithmic progress of this sort in our framework would not change our results, since this is equivalent to multiplying $g_i$ by some factor.

\paragraph{Proprietary algorithmic and data progress.} We model hardware and algorithmic/data efficiency increases as benefiting all AI developers; this is a simplification. A frontier AI company may be able to generate hardware, algorithmic or data innovations that they prevent from diffusing to the rest of the industry \cite{thompson2026is} due to proprietary data, algorithmic innovations that are kept secret, or innovations that are specific to the firm's combinations of models, hardware, and agentic scaffolding \cite{gundlach2026wrapper}. 

For this reason, rates of hardware and algorithmic progress for meek models could conceivably slow substantially, attenuating the core mechanism by which meek models catch up to the frontier. However, as long as the rate of shared, field-wide algorithmic progress doesn't go \textit{all} the way to zero, the meek framework still applies in the long run. 

\subsubsection{Near-term Predictions.} This paper addresses whether metrics will show convergent capabilities in the long term. But even if capabilities converge in the limit, frontier models may pull ahead in the short-term. All meek metrics (which are non-zero) have gaps that first rise and then fall (some unusual ones may have multiple peaks). Convergence may take a long time, yielding a substantial window where the regulatory landscape looks more like a ``mighty'' world. See Fig. \ref{fig:METR_metrics} for an example of what this can look like in practice, using multiple metrics drawn from METR analysis \cite{kwa2026measuring} (details in \ref{app:plot_details}).

\subsection{Governance and Implications}
\label{sec:governance}

% -- implications for incentives to invest
\paragraph{Incentives to invest.} Whether a metric is meek has direct bearing on the private return to compute scaling. For meek metrics, exponential investment yields only a transient advantage: a frontier developer who outspends competitors by orders of magnitude can expect that lead to erode as hardware and algorithmic progress lift the meek competitor's compute budget along the same curve. For mighty metrics, by contrast, the return to scaling is durable, and investment is a moat. The commercial case for continued exponential compute growth therefore depends on which metrics frontier firms can monetize and what the returns on improved performance are --- a question that is itself contested and may differ across domains.

% -- implications for concentration of power
\paragraph{Concentration of power.} When unbounded performance is valuable, owners of compute capital can entrench their advantage: firms or nations who can sustain exponential compute expenditure can consistently control capabilities that smaller players can't access. Those actors may have the ability to dictate terms of use, restrict harmful or privately disadvantageous applications of frontier capabilities, and leverage their capabilities to capture private rents. When the performance metrics we care about are bounded, we see the opposite dynamic: a broadened set of actors who can operate at the frontier in the long run.

The overall future is likely mixed: some sectors concentrated, others proliferated.

% -- national AI advantages and compute controls
\paragraph{Compute controls and national advantages.} Export controls on advanced compute \cite{ogara2025hardwareenabledmechanismsverifyingresponsible} presume that restricting a rival's effective compute will durably restrict its capabilities. If the capabilities in question are ones where unbounded performance matters, compute controls can indeed sustain a durable gap. But for capabilities where performance is bounded when measured in the terms we care about --- that is, cases where meek metrics are apt --- the presumption that compute restrictions lead to persistent capabilities gaps fails in the long run. Provided the restricted actor still enjoys some exponential rate of hardware and algorithmic progress (even if it's a slower exponential rate than the frontier actor enjoys), its capabilities on meek metrics will eventually catch up. \textbf{Compute controls then function as a delay rather than a permanent ceiling on an adversary's capabilities.} 

% -- diffusion of dangerous capabilities 
\paragraph{Dangerous capabilities.} 
For some dangerous capabilities, the relevant social harm may be largely realized once a model crosses a fixed capability threshold: for example, reliably helping a user reproduce a software vulnerability, automate a phishing campaign, synthesize a dangerous protocol from dispersed biological information, or guide a novice through key steps in a CBRN-relevant workflow \citep{frontiermodelforum2025capabilityassessments, mouton2024operational, openai2024bioriskearlywarning, peppin2024reality, zhang2025cybergym, anthropic2025biorisk}. \textbf{In these cases, marginal improvements above the threshold may matter much less than whether the threshold is crossed at all.} If hardware and algorithmic progress continue to raise the effective compute available to fixed-budget developers, then the meek-models result has a direct governance implication: even if only frontier developers can access the dangerous capability today, many meek model builders may eventually access substantially the same threshold capability. If \textit{any} such capabilities impose sufficiently high existential risks, this may be unacceptable \citep{jones2024dilemma}.

This does not mean that every dangerous capability is best understood as meek. Some misuse-relevant quantities may be unbounded or positional: the number of targets that can be attacked, the speed of exploitation, the ability to adapt to active defenders. In particular, societal safety often depends on an offense-defense balance.\footnote{Offense and defense need not be measured on the same performance scale. A model that marginally improves vulnerability discovery may help attackers, while a different model that improves patch generation, intrusion detection, or incident response may help defenders; the ultimate outcome depends on how these two \textit{different} types of performance interact. The meek framework can help ask whether a particular underlying capability --- like cyber knowledge --- will diffuse to meek models, but it does not by itself determine whether diffusion favors attackers or defenders.}

% -- alignment
\paragraph{Alignment.} If meek models inherit the earth, the future is shaped not by one or a few aligned frontier systems but by a population of many roughly frontier-capable models. Which statistic of that population matters---average alignment, maximum alignment, or minimum alignment among any accessible system---depends on the threat model. On pessimistic views in which a single misaligned frontier system suffices for catastrophe, proliferation of frontier-level capabilities is alarming \cite{hammond2025multiagentrisksadvancedai,Bostrom2019Vulnerable}. On more optimistic views in which defense aggregates across aligned systems, and average alignment is what matters, proliferation may be protective. 

\subsubsection{A call for utility-aware analysis.} Empirical claims about convergence or divergence in AI capabilities cannot be naively read off a single metric. The meekness of a metric is a property of the construction of the metric, and closely-related metrics can differ in meekness despite measuring ostensibly similar phenomena. For example, on one metric, we might see the gap in capabilities between the US and China growing, and on another similar metric see it shrinking; which metric is ``right" is a question about how utility changes with respect to a measured increase in capabilities, but metrics are often designed without this question in mind. Analysts, economists, and policymakers should therefore use caution when using existing metrics to draw conclusions about how capabilities gaps are changing. And they should treat the construction and selection of a performance metric as a substantive decision, one that necessarily implies a point of view on how AI capabilities matter. Finally: a better understanding of which metrics most faithfully capture capabilities of economic and social value --- and how models scale on those metrics --- is essential for long-term AI policy.

\section*{Impact Statement}

The authors believe AI capabilities are of fundamental importance to effective governance, geopolitical strategy, scientific progress, economic prosperity, and global welfare. The current understanding of these capabilities relies heavily on the metrics discussed in this paper. Therefore, we see careful analysis of these metrics --- and their relation to compute and capital --- as essential for informing public discourse, improving national security, and guiding policy decisions. 

\section*{LLM Usage Statement}

The authors of this paper used LLMs for generating graphics, styling the text, validating the mathematical results, guidance during mathematical derivations, and literature review.

\bibliography{references}
\bibliographystyle{icml2026}

%%%%%%%%%%%%%%%%%%%%%%%%%%%%%%%%%%%%%%%%%%%%%%%%%%%%%%%%%%%%%%%%%%%%%%%%%%%%%%%
%%%%%%%%%%%%%%%%%%%%%%%%%%%%%%%%%%%%%%%%%%%%%%%%%%%%%%%%%%%%%%%%%%%%%%%%%%%%%%%
% APPENDIX
%%%%%%%%%%%%%%%%%%%%%%%%%%%%%%%%%%%%%%%%%%%%%%%%%%%%%%%%%%%%%%%%%%%%%%%%%%%%%%%
%%%%%%%%%%%%%%%%%%%%%%%%%%%%%%%%%%%%%%%%%%%%%%%%%%%%%%%%%%%%%%%%%%%%%%%%%%%%%%%
\newpage
\appendix
\onecolumn

\section{Formal Definitions and Proofs}
\label{Sec:Formal Definitions and Proofs}

\begin{definition}[Normal Performance Metric]
If $P: \R^+ \rightarrow \R$ is both differentiable and weakly monotonically increasing, we call it a \textbf{normal performance metric}.
\end{definition}
These criteria are natural: (1) Performance growth is typically smooth, and even when performance exhibits sharp jumps, such transitions are still smooth \citep{power2022grokking}, (2) although adding compute can diminish performance (e.g. overfitting), since we only require weak monotonicity, any model which degrades in performance can simply be discarded.\\

\begin{definition}[Meek Metric]
Let $P: \R^+ \rightarrow \R$. We say $P$ is a meek metric if for all pairs of exponents $b > a > 1$, and for all initial values of compute $C_0 > 0$, the following limit holds:
$$\lim_{t \rightarrow \infty} P(b^tC_0) - P(a^tC_0) = 0$$
\end{definition}
Intuitively, $P$ is invariant to exponential differences in effective compute, where $a$ comes from hardware/algorithmic efficiency, and $b$ adds exponential increases in investment. Given some initial compute investment $C_0$, two actors who see exponential differences in the growth of that effective compute will see no durable difference in performance over time.

\begin{theorem}[Equivalence Under Power Law Scaling]
Let $P: \R^+ \rightarrow \R$. Then the following are equivalent: 
\begin{enumerate}
\item For all exponents $b > a > 1$ and compute $C_0 > 0$, $\lim_{t \rightarrow \infty} P(b^tC_0) - P(a^t C_0) = 0$
\item For all powers $b > a > 0$ and compute $C_0 > 0$, $\lim_{t \rightarrow \infty} P(t^bC_0) - P(t^a C_0) = 0$ 
\end{enumerate}

\end{theorem}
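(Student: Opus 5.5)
The equivalence is a pure change of the time variable: no monotonicity, differentiability, or boundedness of $P$ is needed. Both conditions are limits of the same quantity along two time parametrizations related by $t \mapsto e^{t}$. Throughout, I take $t$ to range over the positive reals, as in the definition of a meek metric.

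For (1) $\Rightarrow$ (2), fix powers $b > a > 0$ and $C_0 > 0$, and set $B = e^{b}$ and $A = e^{a}$, so that $B > A > 1$. For $t > 0$ write $s = \log t$; then
\[ t^{b} C_0 = e^{bs} C_0 = B^{s} C_0 \quad\text{and}\quad t^{a} C_0 = A^{s} C_0 . \]
Hence $P(t^b C_0) - P(t^a C_0) = F(\log t)$, where $F(s) := P(B^s C_0) - P(A^s C_0)$. Condition (1), applied to the exponents $B > A > 1$, gives $F(s) \to 0$ as $s \to \infty$. Since $\log t \to \infty$ as $t \to \infty$, the composition satisfies $F(\log t) \to 0$. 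This is a routine $\varepsilon$--$T$ argument: given $S$ with $|F(s)| < \varepsilon$ for $s > S$, take $T = e^{S}$.

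For (2) $\Rightarrow$ (1), fix exponents $b > a > 1$ and $C_0 > 0$, and set $\beta = \log b$ and $\alpha = \log a$, so that $\beta > \alpha > 0$. Writing $u = e^{t}$, we have $b^{t} = e^{\beta t} = u^{\beta}$ and $a^{t} = u^{\alpha}$. Thus $P(b^t C_0) - P(a^t C_0) = G(e^t)$, where $G(u) := P(u^\beta C_0) - P(u^\alpha C_0)$. Condition (2), applied to the powers $\beta > \alpha > 0$, gives $G(u) \to 0$ as $u \to \infty$, and since $e^{t} \to \infty$ the composition tends to $0$ as well.

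There is no substantive obstacle; the only points to check are bookkeeping. First, the maps $b \mapsto \log b$ and $b \mapsto e^{b}$ send the parameter range $b > a > 1$ bijectively and order-preservingly onto $b > a > 0$ and back, so every instance of one condition is covered by an instance of the other. Second, the limit is being composed with a function of time that tends to infinity, which is legitimate.

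The one caveat is if $t$ were restricted to integers. Then $\log t$ does not range over the integers, so one direction would need an extra argument, and without regularity assumptions on $P$ the equivalence could fail. I would therefore state explicitly that $t$ is a real time variable, consistent with the paper's use of continuous growth factors.
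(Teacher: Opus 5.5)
Your proposal is correct and matches the paper's proof essentially verbatim: the paper also substitutes $s=\log t$ (using $e^b>e^a>1$) for $(1)\Rightarrow(2)$, and $s=e^t$ (using $\log b>\log a>0$) for $(2)\Rightarrow(1)$. Your explicit $\varepsilon$--$T$ composition step and your remark that $t$ must range over the reals are sound clarifications that the paper leaves implicit.
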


\begin{proof}
The proof is an exercise in reparameterization. 

To show $(1)\implies(2)$, let $s = \log t$. (Where $\log$ is the natural log throughout this appendix.) For some powers $b > a > 0$, we have that
\begin{align*}
P(t^b C_0) - P(t^aC_0) &= P((e^b)^sC_0) - P((e^a)^s C_0)
\end{align*}
Since $e^b > e^a > 1$, the limit as $t \rightarrow \infty$ (and equivalently $s \rightarrow \infty$) is zero by $(1)$. To show $(2)\implies(1)$, for some exponents $b > a > 1$, instead let $s = e^t$, so we have
\begin{align*}
P(b^tC_0) - P(a^tC_0) &= P((e^{\log b})^tC_0) - P((e^{\log a})^tC_0) \\
&= P(s^{\log b}C_0) - P(s^{\log a} C_0)
\end{align*}
Since $\log b > \log a > 0$, the limit as $s \rightarrow \infty$ (and equivalently $t \rightarrow \infty$) is zero by $(2)$.

\end{proof}

\begin{theorem} If $P$ is a normal performance metric which is \textit{bounded above}, then $P$ is a meek metric.
\end{theorem}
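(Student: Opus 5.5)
The plan is to apply the Monotone Convergence Theorem to $P$ as a function of compute and then transfer the limit along the two exponential compute paths. Since $P$ is a normal performance metric, it is weakly monotonically increasing on $\R^+$. It is also bounded above by hypothesis, so $M := \sup_{C>0} P(C)$ is finite. The first step is to show that $\lim_{C\to\infty} P(C) = M$. Fix $\varepsilon>0$. By the definition of supremum there is some $C_\varepsilon$ with $P(C_\varepsilon) > M-\varepsilon$. Monotonicity then gives $M-\varepsilon < P(C) \le M$ for every $C \ge C_\varepsilon$. Differentiability plays no role here; only monotonicity and boundedness are used.

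The second step is to compose this limit with the compute paths $C_b(t) = b^tC_0$ and $C_a(t) = a^tC_0$. Fix $b>a>1$ and $C_0>0$. Both $b^t$ and $a^t$ tend to infinity as $t\to\infty$, so each path eventually exceeds $C_\varepsilon$. Concretely, take $t \ge T_\varepsilon := \log(C_\varepsilon/C_0)/\log a$. Because $b>a$, this threshold also works for the $b$-path, since $b^tC_0 \ge a^tC_0 \ge C_\varepsilon$ for all $t\ge T_\varepsilon$.

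For all $t\ge T_\varepsilon$, both $P(b^tC_0)$ and $P(a^tC_0)$ then lie in $(M-\varepsilon, M]$. Their difference therefore satisfies $0 \le P(b^tC_0) - P(a^tC_0) < \varepsilon$, where the lower bound $0$ again comes from monotonicity, because $b^tC_0 \ge a^tC_0$. Since $\varepsilon$ was arbitrary, the limit is $0$ for every admissible pair $(a,b)$ and every $C_0$. This is exactly the definition of a meek metric, and $P$ is already a normal performance metric by assumption.

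No step is a real obstacle. The only point needing care is that boundedness must be used through the supremum, so that both sequences converge to the same finite value and their difference vanishes. Monotonicity alone would not suffice: $\log C$ is monotone, unbounded, and not meek. One could alternatively invoke Theorem~\ref{thm:power_law_equiv} and argue in the power-law parameterization, but the direct argument above is simpler, so I would not use it.
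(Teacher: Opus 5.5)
Your proof is correct and follows the same route as the paper: you apply the Monotone Convergence Theorem to $P$ along the two compute paths $b^tC_0$ and $a^tC_0$, so both performance values converge to the least upper bound $M$ and their difference vanishes. Your version is slightly more careful than the paper's, because you explicitly check that both paths tend to infinity and therefore converge to the \emph{same} limit $M = \sup_C P(C)$, which the paper only asserts. One small fix: take $T_\varepsilon := \max\{0, \log(C_\varepsilon/C_0)/\log a\}$, so that $b^tC_0 \ge a^tC_0$ actually holds for all $t \ge T_\varepsilon$.
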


\begin{proof}
Let $P$ be a normal performance metric with least upper bound $M$. Let $f_x(t) = P(x^tC_0)$. Since $b^tC_0$ and $a^tC_0$ are monotonically increasing in $t$, as is $P$, the functions $f_a(t)$ and $f_b(t)$ are monotonically increasing and bounded above by $M$, and therefore both converge to $M$ as $t \rightarrow \infty$ by the Monotone Convergence Theorem.
\end{proof}

\begin{theorem}
Let $P(C)$ be a monotonically increasing, differentiable, unbounded function. Moreover, suppose the derivative of $P$ with respect to $\log\log{C}$ is eventually monotone. Then $P$ is a meek metric if and only if $$\lim_{C \rightarrow \infty} \frac{P(C)}{\log \log{C}} = 0$$
\end{theorem}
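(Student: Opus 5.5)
The plan is to reparametrize so that the doubly-logarithmic scale becomes linear, and then reduce both directions to the asymptotic behaviour of a single derivative. For $u \in \R$ set $Q(u) = P(e^{e^u})$, so that $Q(\log\log C) = P(C)$ for $C > e$. By the chain rule $Q$ is differentiable and weakly increasing, and it is unbounded above because $P$ is. The hypothesis says that $Q'(u)$, the derivative of $P$ with respect to $\log\log C$, is eventually monotone. Since $Q' \ge 0$, it follows that $L := \lim_{u\to\infty} Q'(u)$ exists in $[0,\infty]$. The target condition $P(C)/\log\log C \to 0$ is exactly $Q(u)/u \to 0$.

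\textbf{Step 1 (translate the gap).} Fix $b > a > 1$ and $C_0 > 0$, and put $u_x(t) = \log\log(x^t C_0) = \log(t\log x + \log C_0)$, which is defined for large $t$. Then
$$u_b(t) - u_a(t) = \log\frac{t\log b + \log C_0}{t\log a + \log C_0} \;\longrightarrow\; \delta := \log\log b - \log\log a > 0,$$
and $u_a(t), u_b(t) \to \infty$. The mean value theorem gives
$$P(b^tC_0) - P(a^tC_0) = Q(u_b(t)) - Q(u_a(t)) = Q'(\xi_t)\,(u_b(t)-u_a(t))$$
for some $\xi_t \in (u_a(t), u_b(t))$, with $\xi_t \to \infty$. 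Hence the gap tends to $L\delta$. This is $0$ when $L = 0$. When $L \in (0,\infty]$ it is a positive limit (possibly $+\infty$), so in that case $P$ is not meek.

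Step 1 therefore shows that $P$ is meek if and only if $L = 0$. For the forward direction we need only one choice of $(a,b,C_0)$, and any choice has $\delta > 0$.

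\textbf{Step 2 (relate $L$ to the growth ratio).} Since $u \to \infty$ and $\lim Q'(u) = L$ exists in $[0,\infty]$, L'H\^opital's rule in its ``denominator tends to infinity'' form gives $Q(u)/u \to L$. This form needs no assumption on $Q$ and covers $L = \infty$. Alternatively, one can integrate: for large $u_0$, $Q(u) = Q(u_0) + \int_{u_0}^u Q'$, which is a Ces\`aro average of $Q'$. Consequently $Q(u)/u \to 0$ if and only if $L = 0$. Combining this with Step 1 yields the theorem.

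The main obstacle is justifying that $\lim Q'$ exists and controls the mean-value point $\xi_t$. This is exactly where the eventual-monotonicity hypothesis is needed. Without it, $Q'$ could oscillate, with $Q(u)/u \to 0$ holding while the gap fails to converge along some $(a,b)$, or vice versa. So I would be careful to extract the limit $L$ first and only then apply the mean value theorem and L'H\^opital. A minor bookkeeping point is that $t$ must be large enough that $\log(x^tC_0) > 0$ whenever $C_0 < 1$. Finally, unboundedness of $P$ is not really needed in the argument: a bounded $P$ has $L = 0$ and is covered by Theorem~\ref{thm:bounded}.
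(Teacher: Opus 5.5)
Your proof is correct and follows the paper's route: both arguments reduce meekness and the growth condition to whether $D = dP/d(\log\log C)$ tends to $0$, with your Step~2 matching the paper's Result~1 and your Step~1 matching its Result~2 (your $\delta$ is the paper's $\log\lambda$); your mean value theorem and the form of L'H\^opital needing only a divergent denominator replace the paper's integral sandwich and its separate integration argument for an infinite limit, which spares you any fundamental-theorem-of-calculus justification for $P'$. One correction to your closing aside: the ``or vice versa'' cannot occur, because meekness with $C_0 = 1$ gives $Q(v+\delta) - Q(v) \to 0$ for a fixed $\delta > 0$, and this alone, with $Q$ nondecreasing, forces $Q(u)/u \to 0$, so eventual monotonicity of $Q'$ is needed only for the ``if'' direction.
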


\begin{proof}

We'll show two results, which together give the result:
\begin{enumerate}
    \item Let $D(C)$ be the derivative of $P(C)$ with respect to $\log\log C$. Then $$\frac{P(C)}{\log \log C} \rightarrow 0 \iff D(C) \rightarrow 0$$
    \item For some arbitrary exponents $b > a > 1$ and some initial compute $C_0$, let $\Delta(x) = P(b^tC_0) - P(a^tC_0)$  --- the relevant difference in the meek metric criteria. If $D(x)$ is eventually \textit{increasing}, then for some function $I$ such that $\lim_{x \rightarrow \infty} I(x) = \log \lambda$, we have the following bounds. $$D(x) I(x) \leq \Delta(x) \leq D(kx^\lambda) I(x)$$
    where $\lambda$ and $k$ are non-zero constants which depend on $b$, $a$, and $C_0$. If $D(x)$ is eventually \textit{decreasing}, the inequalities are flipped.
\end{enumerate}

 From (2), we can see that $\Delta(x)$ goes to zero if and only if $D(x) \rightarrow 0$ since $\log \lambda$ is non-zero, and therefore if and only if $P(C)/\log \log{C} \rightarrow 0$ by (1).\\\\

\textbf{Proof of Result 1:} First note that $P(C) \rightarrow \infty$ (by assumption) and $\log \log C \rightarrow \infty$ as $C \rightarrow \infty$. Since $D(C)$ is monotonic, the extended limit certainly exists. If the limit is finite, we can invoke L'Hopital's rule to show that $P(C)/\log \log C$ and $D(C)$ share a limit:
\begin{align*}
=& \lim_{ C \rightarrow \infty} \frac{P(C)}{\log \log C} \\
\overset{\text{LH}}{=}& \lim_{C \rightarrow \infty} \frac{dP/dC}{d(\log \log C)/dC}  \\
=& \lim_{C \rightarrow \infty} \frac{dP}{dC} \frac{dC}{d(\log \log C)} \\
=& \lim_{C \rightarrow \infty} \frac{dP}{d(\log \log C)} \\
=&\lim_{C \rightarrow \infty} D(C)
\end{align*}
If the limit of $D(C)$ is infinite, we instead show $P(C)/\log\log C$ goes to infinity. First, for all $M > 0$, we know $D(C) \geq M$ for large enough $C$, and thus $(dP/dC) \geq M/(C\log C)$. Choosing some small fixed $c_0$, we can integrate $dP/dC$ to get:
$$P(C) - P(c_0) = \int_{c_0}^C \frac{dP}{du} du \geq \int_{c_0}^C \frac{M}{u \log u} du = M(\log \log C - \log \log c_0)$$
Rearranging, we get $$\frac{P(C)}{\log \log C} \geq M + \frac{\mathcal{O}(1)}{\log\log C}$$
So clearly $\liminf_{C \rightarrow \infty} P(C)/\log \log C \geq M$, and since $M$ is arbitrary, it is unbounded like $D(C)$.

\textbf{Proof of Result 2:} We first re-parameterize the second argument in the meek metric criteria, $a^tC_0$, as $x$. Then we define two constants $\lambda = \log_ab > 1$ and $k = C_0^{1-\lambda}$, such that we express $b^tC_0$ as $kx^\lambda$. We re-write the meek metric difference, $\Delta(x)$, using these parameters:

\[\Delta(x) = P(kx^\lambda) - P(x) \]

To bound $\Delta(x)$, we write it in its integral form, using the relation $D(C) = C \log C \frac{dP(C)}{dC}$:

\begin{align*}
    & \lim_{x \rightarrow \infty} \Delta(x) \\
    =&\lim_{x \rightarrow \infty} P(kx^\lambda) - P(x) \\
    \overset{*}{=}& \lim_{x \rightarrow \infty}\int_x^{kx^\lambda} \frac{dP}{du} du \\
    =& \lim_{x \rightarrow \infty}\int_x^{kx^\lambda} \frac{(u\log{u})\frac{dP}{du}}{(u\log{u})} du\\
    =& \lim_{x \rightarrow \infty}\int_x^{kx^\lambda} \frac{D(u)}{(u\log{u})} du
\end{align*}
The use of the fundamental theorem of calculus is justified since $D$ is monotone and bounded on the interval for sufficiently large $x$, thus Riemann integrable, and since $1/(u\log u)$ is continuous on the interval and thus Riemann integrable, $P'$ is too as the product of two Riemann integrable functions.
Without loss of generality, assume $D(u)$ is eventually monotonically \textit{increasing}. We can bound the integral as:

$$D(x)\underbrace{\int_x^{kx^\lambda} \frac{1}{(u\log u)} du}_{I(x)} \leq \int_x^{kx^\lambda} \frac{D(u)}{(u\log{u})} du \leq D(kx^\lambda)\underbrace{\int_x^{kx^\lambda} \frac{1}{(u\log u)} du}_{I(x)}$$

Finally, we can show that $I(x) \rightarrow \log \lambda$ by computation:
\begin{align*}
& \lim_{x\rightarrow \infty} \int_x^{kx^\lambda} \frac{1}{(u\log u)} du  \\
=& \lim_{x\rightarrow \infty} \log \log \left(kx^\lambda\right) - \log\log x \\
=& \lim_{x\rightarrow \infty} \log\left(\frac{\log kx^\lambda}{\log x}\right) \\
=& \lim_{x\rightarrow \infty} \log\left(\frac{\left(\log k + \lambda\log x\right)}{\log x}\right) \\
=& \lim_{x\rightarrow \infty} \log\left(\frac{1}{\log x}(\log k + \lambda\log x)\right) \\
=& \lim_{x\rightarrow \infty} \log\left(\frac{\log k}{\log x} + \lambda\right) \\
=& \log \lambda = \log(\log_ab) > 0
\end{align*}

\end{proof}

\begin{corollary}[Original ``Meek Models'']

Let $L(C)$ be the validation loss of a model at compute optimality, given by $L(C) = AC^{-\alpha} + E$, where $A,E$ and $\alpha$ are positive constants. Moreover let $g_a, g_h,$ and $g_i$ be annual rates of shared algorithmic progress, shared hardware progress, and compute scaling by a single actor, each greater than $1$. Then the loss difference between the scaling actor and a static actor, given by $L((g_hg_a)^tC_0) - L((g_hg_ag_i)^tC_0)$, goes to zero as $t \rightarrow \infty$.
\end{corollary}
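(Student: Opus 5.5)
The plan is to reduce the statement to the bounded case, Theorem~\ref{thm:bounded}, by passing to the negated loss. Loss is decreasing, so it is not itself a normal performance metric, but $P(C) := -L(C) = -AC^{-\alpha} - E$ is. It is differentiable on $\R^+$, with $P'(C) = \alpha A C^{-\alpha-1} > 0$, so it is (strictly) monotonically increasing. It is also bounded above by $-E$, since $A>0$ and $C^{-\alpha}>0$.

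With this choice, Theorem~\ref{thm:bounded} says $P$ is a meek metric. I would then instantiate the exponents as follows:
\begin{itemize}
\item the slow exponent is $a = g_hg_a$;
\item the fast exponent is $b = g_hg_ag_i$.
\end{itemize}
Since each of $g_h, g_a, g_i$ exceeds $1$, we have $b > a > 1$, so these are admissible in the definition of a meek metric. Meekness gives $P(b^tC_0) - P(a^tC_0) \to 0$. Rewriting in terms of $L$,
\[
P(b^tC_0) - P(a^tC_0) = L(a^tC_0) - L(b^tC_0),
\]
which is exactly the loss difference in the statement, so it tends to zero.

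There is essentially no obstacle here. The only points needing care are the sign convention, namely negating the loss so the metric is increasing, and checking that $a>1$ holds, which uses $g_hg_a>1$. As a sanity check I would also note the direct computation. The difference equals
\[
AC_0^{-\alpha}\left[(g_hg_a)^{-\alpha t} - (g_hg_ag_i)^{-\alpha t}\right],
\]
and both terms decay geometrically because $\alpha>0$ and both bases exceed $1$. This makes the limit immediate without invoking Theorem~\ref{thm:bounded}.
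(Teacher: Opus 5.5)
Your proposal is correct and is essentially the paper's argument: set $P=-L$, note that it is a normal performance metric bounded above by $-E$, and apply Theorem~\ref{thm:bounded} with $b=g_ig_hg_a>g_hg_a=a>1$. Your explicit checks of differentiability, monotonicity and the sign flip, and your direct geometric-decay computation, fill in details the paper leaves implicit.
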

This corollary can be seen immediately by letting $b = g_ig_hg_a > g_hg_a = a$, and $P(C) = -L(C)$. Since $L$ is bounded below, $P$ is bounded above and is therefore a meek metric.

\section{Plot details}
\label{app:plot_details}

We construct smooth trends from constants reported by METR \citep{kwa2026measuring}, starting from the release date of GPT-4–0314. We use their exponential doubling time for the unbounded 50\% time horizon, reported as $\sim$7-month. We also use the logistic-in-task-duration relationship for the probability of task success, which shifts rightward with time according to 50\% time horizon, and estimate probabilities at three fixed human-completion times (4~min, 1~hr, and 4~hr). We anchor the horizon trend to Claude 3.7 Sonnet's reported $\sim$1-hour 50\% horizon (released $1.95$~years after GPT-4 0314), and place the frontier and meek models at the same effective compute at $t=0$ (GPT-4 0314). The success logistic is explicitly reported to have a $5\times$ ratio between the 50\% and 80\% horizons, which locks in the slope of that curve, while the exponential locks in the x-axis shift at a particular point in time. 

For the meek versus mighty estimation, the meek model follows an identical time trend more slowly, such that $\text{meek}(t) = \text{frontier}(\rho t)$. Intuitively, for $b > a > 1$, we want $b^{\rho t}C_0 = a^tC_0$, and therefore $\rho = \log(a)/\log(b) = \log(g_h g_a)/\log(g_h g_a g_i) \approx 0.55$. Here, $g_h \approx 1.36$/yr is the shared hardware-efficiency growth, taken as the FP16 performance-per-dollar trend for top-performing datacenter GPUs without sparsity (a $2.25$-year doubling) from \citet{delsozzo2026nvidia}; we use FP16 as the precision on which modern mixed-precision frontier training runs, and note that the more precision-neutral FP32 trend gives a similar, slightly more conservative rate. The shared algorithmic effective-compute rate is $g_a = 2.8$/yr \citep{ho2024algorithmic}. Frontier training compute has grown at roughly $g_h g_i \approx 4.1$/yr \citep{Sevilla_2022}, implying a frontier-only investment scale-up $g_i \approx 3.0$/yr after dividing out hardware. All curves plot the frontier-minus-meek gap of the titular metric, projected to 2040.

%%%%%%%%%%%%%%%%%%%%%%%%%%%%%%%%%%%%%%%%%%%%%%%%%%%%%%%%%%%%%%%%%%%%%%%%%%%%%%%
%%%%%%%%%%%%%%%%%%%%%%%%%%%%%%%%%%%%%%%%%%%%%%%%%%%%%%%%%%%%%%%%%%%%%%%%%%%%%%%

% (lstinputlisting) meek_gap_metr.py
\begin{lstlisting}[style=pycode]
"""Frontier-vs-meek performance gap on METR time-horizon metrics (constants only)."""

import math
import numpy as np

# ========================= CONSTANTS =========================

# Meekness growth rates per year (hardware, algorithmic, $-investment); set RHO.
HW_DOUBLING_YR = 2.25  # FP16 perf/$, top GPUs, no sparsity -- Del Sozzo et al. (2026)
G_H = 2.0 ** (1.0 / HW_DOUBLING_YR)
G_A, G_I = 2.80, 3.00  # Ho et al. (2024); Sevilla/Epoch

# METR time-horizon trend -- Kwa & West et al. (2025).
DOUBLING_MONTHS = 7.0   # 50% horizon doubling time
RATIO_50_80 = 5.0       # 50% horizon is this many times longer than the 80% horizon
P_HI = 0.80             # the higher success rate anchoring the logistic slope

ANNUAL_GROWTH = 12.0 / DOUBLING_MONTHS 
H_ANCHOR_MIN, T_ANCHOR_YR = 60.0, 1.95   # Claude 3.7 Sonnet: ~1 hr, 1.95 yr after GPT-4-0314

# Starting at years since GPT-4-0314.
REF_YEAR, END_YEAR = 2023.20, 2040.0
TASK_MIN = {"4 minutes": 4.0, "1 hour": 60.0, "4 hours": 240.0}

# ============================== MODEL =======================================

def frontier_50_percent_horizon(t):
    """Frontier 50% time horizon (minutes) at t years since the reference date."""
    return 2.0 ** (ANNUAL_GROWTH * (t - T_ANCHOR_YR) + math.log2(H_ANCHOR_MIN))

def frontier_p_success(t, task_min):
    """Frontier P(success) on a task of length task_min, at year-offset t."""
    logit = lambda x: math.log(x / (1.0 - x))
    # From 50% --> 80% should give you logit(.8) increase, since logit(.5) = 0. 
    log_space_distance_at_t = np.log2(frontier_50_percent_horizon(t)) - np.log2(task_min)
    logit_space_increase_per_log_space = logit(P_HI) / math.log2(RATIO_50_80)
    return 1.0 / (1.0 + np.exp(- logit_space_increase_per_log_space  * log_space_distance_at_t))

def gap(curve, t):
    """Frontier minus meek, where the meek path reaches the frontier at rate RHO."""
    # We want a RHO such that b^(RHO * t)C_0 = a^(t) C_0. That RHO is simply log(a)/log(b), 
    # where b = G_H * G_A * G_I, and a = G_H * G_A
    RHO = math.log(G_H * G_A) / math.log(G_H * G_A * G_I) 
    return curve(t) - curve(RHO * t)

years    = np.linspace(0, END_YEAR - REF_YEAR, 400)
calendar = REF_YEAR + years
gap_horizon = gap(frontier_50_percent_horizon, years)
gap_p       = {lbl: gap(lambda t, T=T: frontier_p_success(t, T), years)
               for lbl, T in TASK_MIN.items()}

# ============================== PLOTTING ====================================

import matplotlib.pyplot as plt
from matplotlib.ticker import FuncFormatter, FixedLocator, MultipleLocator

LABEL_FS, TICK_FS, TITLE_FS, LEGEND_FS = 16, 13, 18, 16

TIME_TICKS = [(1, "1 min"), (60, "1 hr"), (1440, "1 day"), (43200, "1 mo"),
              (525600, "1 yr"), (5256000, "10 yr"), (52560000, "100 yr"),
              (525600000, "1000 yr")]
year_fmt = FuncFormatter(lambda x, _: f"{int(round(x))}")
pct_fmt  = FuncFormatter(lambda v, _: f"{v:.0f}%")

fig, (axL, axR) = plt.subplots(2, 1, figsize=(7.5, 9.5))

axL.plot(calendar, gap_horizon, color="#1f77b4", lw=2.5)
axL.set_yscale("log")
axL.yaxis.set_major_locator(FixedLocator([m for m, _ in TIME_TICKS]))
axL.yaxis.set_major_formatter(FuncFormatter(lambda m, _: dict(TIME_TICKS).get(m, "")))
axL.set_ylim(min(gap_horizon[gap_horizon > 0]), gap_horizon.max() * 1.5)
axL.set_xlabel("Year", fontsize=LABEL_FS)
axL.set_ylabel("Frontier Model vs. Meek Model Gap", fontsize=LABEL_FS)
axL.set_title("METR 50% Time Horizon", fontsize=TITLE_FS, fontweight="bold")
axL.grid(True, which="both", alpha=0.25)

for (lbl, g), color in zip(gap_p.items(), ("#2ca02c", "#d62728", "#9467bd")):
    axR.plot(calendar, 100.0 * g, color=color, lw=2.5, label=lbl)
axR.yaxis.set_major_formatter(pct_fmt)
axR.set_xlabel("Year", fontsize=LABEL_FS)
axR.set_ylabel("Frontier Model vs. Meek Model Gap", fontsize=LABEL_FS)
axR.set_title("METR Projected Success at Fixed Time Horizon", fontsize=TITLE_FS, fontweight="bold")
axR.set_ylim(0, None)
axR.legend(frameon=False, fontsize=LEGEND_FS)
axR.grid(True, alpha=0.25)

for ax in (axL, axR):
    ax.xaxis.set_major_locator(MultipleLocator(4))
    ax.xaxis.set_major_formatter(year_fmt)
    ax.tick_params(axis="both", labelsize=TICK_FS)

fig.tight_layout()
fig.align_ylabels((axL, axR))
fig.savefig("meek_gap_metr.png", dpi=180, bbox_inches="tight")
\end{lstlisting}

\end{document}